\documentclass{article}
\PassOptionsToPackage{numbers, compress}{natbib}
\usepackage[preprint]{neurips_2020}
\usepackage[utf8]{inputenc} 
\usepackage[T1]{fontenc}    
\usepackage{hyperref}       
\usepackage{url}            
\usepackage{booktabs}       
\usepackage{amsfonts}       
\usepackage{nicefrac}       
\usepackage{microtype}      
\usepackage{graphicx}
\usepackage{amsmath,amssymb,amsfonts}
\usepackage{amsthm}
\usepackage{cleveref}
\usepackage{booktabs}
\usepackage{soul}
\usepackage{color}
\usepackage{balance}
\usepackage{siunitx}
\usepackage{floatflt}
\usepackage{dsfont}
\usepackage{subcaption}
\usepackage{caption}

\usepackage{makecell}
\usepackage{siunitx}
\sisetup{detect-weight=true, detect-family=true}
\usepackage{bm}
\usepackage{mathtools}
\usepackage[ruled, vlined, linesnumbered]{algorithm2e}
\usepackage{tikz}
\usetikzlibrary{arrows,automata, positioning, calc}
\DeclareRobustCommand{\nIAVI}{\mbox{IAVI}}
\DeclareRobustCommand{\nIQL}{\mbox{IQL}}
\DeclareRobustCommand{\nDIQL}{\mbox{DIQL}}
\DeclareRobustCommand{\nDCIQL}{\mbox{DCIQL}}

\DeclareRobustCommand{\IAVI}{\mbox{IAVI} }
\DeclareRobustCommand{\IQL}{\mbox{IQL} }
\DeclareRobustCommand{\DIQL}{\mbox{DIQL} }
\DeclareRobustCommand{\DCIQL}{\mbox{DCIQL} }

\DeclareRobustCommand{\CDQN}{\mbox{CDQN} }

\crefname{thm}{theorem}{theorems}
\Crefname{thm}{Theorem}{Theorems}

\crefname{prop}{proposition}{propositions}
\Crefname{prop}{Proposition}{Propositions}

\newtheorem{theorem}{Theorem}

\makeatletter
\newcommand{\printfnsymbol}[1]{%
  \textsuperscript{\@fnsymbol{#1}}%
}
\makeatother

\Crefname{algocf}{Algorithm}{Algorithms}

\title{Deep Inverse Q-learning with Constraints}

\author{%
  Gabriel Kalweit\thanks{Equal Contribution.}\\
  Neurorobotics Lab\\
  University of Freiburg\\
  \texttt{kalweitg@cs.uni-freiburg.de}
  \And
  Maria Huegle\printfnsymbol{1}\\
  Neurorobotics Lab\\
  University of Freiburg\\
  \texttt{hueglem@cs.uni-freiburg.de}
  \And
  Moritz Werling\\
  BMWGroup\\
  Germany\\
  \texttt{Moritz.Werling@bmw.de}
  \And
  Joschka Boedecker\\
  Neurorobotics Lab\\
  University of Freiburg\\
  \texttt{jboedeck@cs.uni-freiburg.de}
}

\begin{document}

\maketitle

\begin{abstract}
Popular Maximum Entropy Inverse Reinforcement Learning approaches require the computation of expected state visitation frequencies for the optimal policy under an estimate of the reward function. This usually requires intermediate value estimation in the inner loop of the algorithm, slowing down convergence considerably. 
In this work, we introduce a novel class of algorithms that only needs to solve the MDP underlying the demonstrated behavior \textit{once} to recover the expert policy. This is possible through a formulation that exploits a probabilistic behavior assumption for the demonstrations within the structure of Q-learning. We propose Inverse Action-value Iteration which is able to fully recover an underlying reward of an external agent in \emph{closed-form} analytically. We further provide an accompanying class of sampling-based variants which do not depend on a model of the environment. We show how to extend this class of algorithms to continuous state-spaces via function approximation and how to estimate a corresponding action-value function, leading to a policy as close as possible to the policy of the external agent, while optionally satisfying a list of predefined hard constraints. We evaluate the resulting algorithms called Inverse Action-value Iteration, Inverse Q-learning and Deep Inverse Q-learning on the Objectworld benchmark, showing a speedup of up to several orders of magnitude compared to (Deep) Max-Entropy algorithms. We further apply Deep Constrained Inverse Q-learning on the task of learning autonomous lane-changes in the open-source simulator SUMO achieving competent driving after training on data corresponding to 30 minutes of demonstrations.
\end{abstract}

\section{Introduction}
\label{sec:intro}
 Inverse Reinforcement Learning (IRL) \cite{Arora2018ASO} is a popular approach to imitation learning which generally reduces the problem of recovering a demonstrated behavior to the recovery of a reward function that induces the observed behavior, assuming that the demonstrator was (softly) maximizing its long-term return. Previous work solved this problem with different approaches, such as Linear IRL \cite{DBLP:conf/icml/NgR00,abbeel2004apprenticeship} and Large-Margin Q-Learning \cite{Ratliff:2006:MMP:1143844.1143936}. A very influential approach which addresses the inherent ambiguity of possible reward functions and induced policies for an observed behavior is Maximum Entropy Inverse Reinforcement Learning (MaxEnt IRL, \cite{Ziebart2008MaximumEI}), a probabilistic formulation of the problem which keeps the distribution over actions as non-committed as possible. This approach has been extended with deep networks as function approximators for the reward function in \cite{DBLP:journals/corr/WulfmeierOP15} and is the basis for several more recent algorithms that lift some of its assumptions (e.g. \cite{DBLP:conf/icml/FinnLA16,DBLP:conf/nips/HoE16}). One general limitation of MaxEnt IRL based methods, however, is that the considered MDP underlying the demonstrations has to be solved many times inside the inner loop of the algorithm. We also use a probabilistic problem formulation, but assume a policy that only maximizes the entropy over actions locally at each step as an approximation. This leads to a novel class of IRL algorithms based on Inverse Action-value Iteration (\nIAVI) and allows us to avoid the computationally expensive inner loop (a property shared e.g. with \cite{DBLP:journals/jmlr/BoulariasKP11}). With this formulation, we are able to calculate a matching reward function for the observed (optimal) behavior \textit{analytically in closed-form}, assuming that the data was collected from an expert following a stochastic policy with an underlying Boltzmann distribution over optimal Q-values, similarly to \cite{neu07, ramachandran2007bayesian}. Our approach, however, transforms the IRL problem to solving a system of linear equations. Consequently, if there exists an unambiguous reverse topological order and terminal states in the MDP, our algorithm has to solve this system of equations only once for each state, and can achieve speedups of up to several orders of magnitude compared to MaxEnt IRL variants for general infinite horizon problems.

We extend \IAVI to a sampling based approach using stochastic approximation, which we call Inverse Q-learning (\nIQL), using Shifted Q-functions proposed in \cite{composite_q} to make the approach model-free. 
In contrast to other proposed model-free IRL variants such as Relative Entropy IRL \cite{DBLP:journals/jmlr/BoulariasKP11} that assumes rewards to be a linear combination of features or Guided Cost Learning \cite{DBLP:conf/icml/FinnLA16} that still needs an expensive inner sampling loop to optimize the reward parameters, our proposed algorithm accommodates arbitrary nonlinear reward functions such as neural networks, and needs to solve the MDP underlying the demonstrations only once. A different approach to model-free IRL is Generative Adversarial Imitation Learning (GAIL), recently proposed in \cite{DBLP:conf/nips/HoE16}. GAIL generates a policy to match the experts behavior without the need to first reconstruct a reward function. In some contexts, however, this can be undesirable, e.g. if additional constraints should be enforced that were not part of the original demonstrations.
This is in fact what we cover in a further extension of our  algorithms, leading to Constrained Inverse Q-learning (CIQL). 
Closely related to this is learning by imitation with preferences and constraints as studied in a teacher-learner setting in \cite{DBLP:conf/nips/TschiatschekGHD19}, but the approach internally relies on value estimation with a given transition model and assumes a linear reward formulation. The model-based Inverse KKT approach presented in \cite{englert2017invkkt} also studies imitation in a constrained setting, but from the perspective of identifying constraints in the demonstrated behavior. Our final contribution extends all algorithms to the case of continuous state representations by using function approximation, leading to the Deep Inverse Q-Learning (\nDIQL) and Deep Constrained Inverse Q-learning (\nDCIQL) algorithms (see \Cref{fig:diqlscheme} for a schematic overview). A compact summary comparing the different properties of the various approaches mentioned above can be found in \Cref{tab:comp}. To evaluate our algorithms, we compare the performance for the Objectworld benchmark to MaxEnt IRL based baselines, and present results of an imitation learning task in a simulated automated driving setting where the agent has to learn a constrained lane-change behavior from unconstrained demonstrations. We fix notation and define the IRL problem in \Cref{sec:RL}, derive \IAVI in \Cref{sec:IAVI}, and its variants in Sections \ref{sec:IQL}-\ref{sec:DCIQL}. \Cref{sec:experiments} presents our experimental results and \Cref{sec:conclusion} concludes.

\begin{figure}
    \centering
   \includegraphics[width=0.91\textwidth,trim=0 1.1cm 0 0,clip]{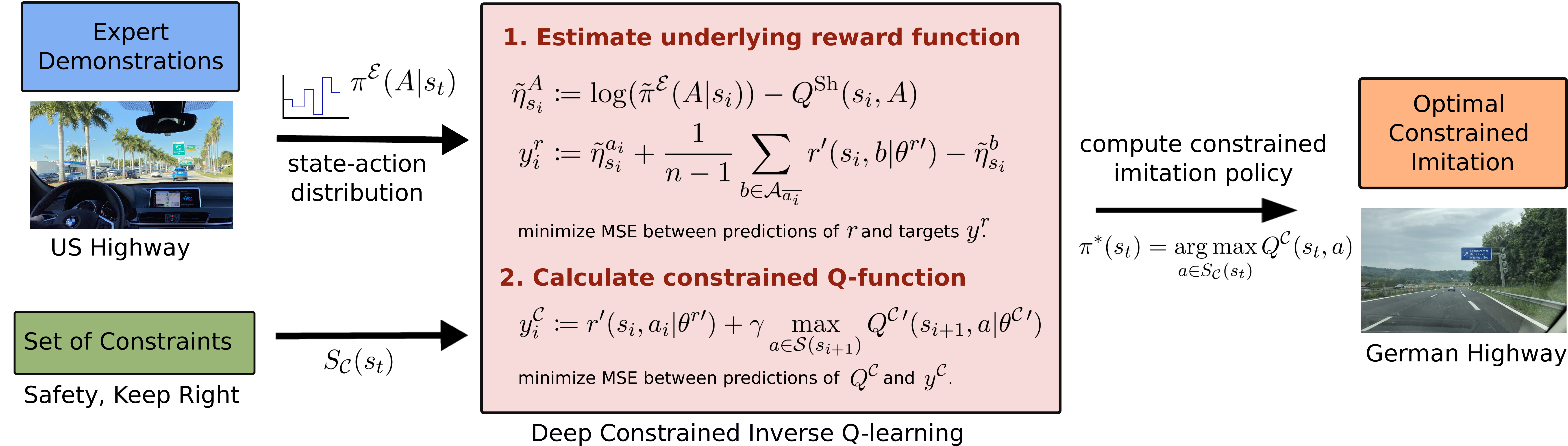}
    \caption{Scheme of Deep Constrained Inverse Q-learning for the constrained transfer of unconstrained driving demonstrations, leading to optimal constrained imitation.}
    \label{fig:diqlscheme}
\end{figure}

\begin{table}[h!]
    \centering
    \caption{Overview of the different IRL approaches.}
    \begin{tabular}{cccccc}
         \toprule
         & State-spaces&Model-free&Rewards&Constraints&Inner Loop\\
         \midrule
         MaxEnt IRL \cite{Ziebart2008MaximumEI}&discrete&$\times$&linear&$\times$&VI\\
         Deep MaxEnt IRL \cite{DBLP:journals/corr/WulfmeierOP15}&discrete&$\times$&non-linear&$\times$&VI\\
         RelEnt IRL \cite{DBLP:journals/jmlr/BoulariasKP11}&continuous&$\checkmark$&linear&$\times$&$\times$\\
         GCL \cite{DBLP:conf/icml/FinnLA16}&continuous&$\checkmark$&non-linear&$\times$&PO\\
         GAIL \cite{DBLP:conf/nips/HoE16}&continuous&$\checkmark$&$\times$&$\times$&$\times$\\
         AWARE-CMDP \cite{DBLP:conf/nips/TschiatschekGHD19}&discrete&$\times$&linear&$\checkmark$&VI\\
         \midrule
         \textbf{IAVI (ours)}&discrete&$\times$&non-linear&$\times$&$\times$\\
         \textbf{(C)IQL (ours)}&discrete&$\checkmark$&non-linear&$(\checkmark)$&$\times$\\
         \textbf{D(C)IQL (ours)}&continuous&$\checkmark$&non-linear&$(\checkmark)$&$\times$\\
         \bottomrule
    \end{tabular}
    \label{tab:comp}
\end{table}

\section{Reinforcement Learning and Inverse Reinforcement Learning}
\label{sec:RL}
We model tasks in the reinforcement learning (RL) framework, where an agent acts in an environment following a policy $\pi$ by applying action $a_t \sim \pi$ from $n$-dimensional action-space $\mathcal{A}$ in state $s_t$ from state-space $\mathcal{S}$ in each time step $t$. According to model $\mathcal{M}:\mathcal{S}\times\mathcal{A}\times\mathcal{S}\mapsto[0,1]$, the agent reaches some state $s_{t+1}$. For every transition, the agent receives a scalar reward $r_{t}$ from reward function $r:\mathcal{S}\times\mathcal{A}\mapsto\mathbb{R}$ and has to adjust its policy $\pi$ so as to maximize the expected long-term return $R(s_t) = \sum_{i>=t} \gamma^{i-t}r_{i}$, where $\gamma \in [0, 1]$ is the discount factor. We focus on off-policy Q-learning~\cite{watkins1992q}, where an optimal policy can be found on the basis of a given transition set. The Q-function $Q^\pi(s_t,a_t)=\mathbf{E}_{\pi,\mathcal{M}}[R(s_t)|a_t]$ represents the value of an action $a_t$ and following $\pi$ thereafter. From the optimal action-value function $Q^*$, the optimal policy $\pi^*$ can be extracted by choosing the action with highest value in each time step. In the IRL framework, the immediate reward function is unknown and has to be estimated from observed trajectories collected by expert policy $\pi^\mathcal{E}$.

\section{Inverse Action-value Iteration}
\label{sec:IAVI}

We start with the derivation of model-based Inverse Action-value Iteration, where we first establish a relationship between Q-values of a state action-pair $(s,a)$ and the Q-values of all other actions in this state. We assume that the trajectories are collected by an agent following a stochastic policy with an underlying Boltzmann distribution according to its unknown optimal value function. For a given optimal action-value function $Q^*$, the corresponding distribution is: \begin{align}\pi^\mathcal{E}(a|s) \coloneqq \frac{\exp(Q^*(s, a))}{\sum_{A\in\mathcal{A}}\exp(Q^*(s, A))},\end{align} for all actions $a\in\mathcal{A}$.
Rearranging gives: \begin{align} \sum_{A\in\mathcal{A}}\exp(Q^*(s, A)) = \frac{\exp(Q^*(s, a))}{\pi^\mathcal{E}(a|s)} \text{ and } \exp(Q^*(s, a)) = \pi^\mathcal{E}(a|s)\sum_{A\in\mathcal{A}}\exp(Q^*(s, A)).\end{align} Denoting the set of actions excluding action $a$ as $\mathcal{A}_{\bar a}$, 
we can express the Q-values for an action $a$ in terms of Q-values for any action $b\in\mathcal{A}_{\bar a}$ in state $s$ and the probabilities for taking these actions: 
\begin{align}
\exp(Q^*(s, a)) &=\pi^\mathcal{E}(a|s)\sum_{A\in\mathcal{A}}\exp(Q^*(s, A)) 
= \frac{\pi^\mathcal{E}(a|s)}{\pi^\mathcal{E}(b|s)}\exp(Q^*(s, b)).
\label{eq:exp}
\end{align}
Taking the log\footnote{For numerical stability, a small $\epsilon\ll1$ can be added for probabilities equal to $0$.} then leads to 
$Q^*(s, a)=Q^*(s, b) + \log(\pi^\mathcal{E}(a|s)) - \log(\pi^\mathcal{E}(b|s))$.
Thus, we can relate the optimal value of action $a$ with the optimal values of all other actions in the same state by including the respective log-probabilities:
\begin{align}
(n-1) Q^*(s, a) &= (n-1) \log(\pi^\mathcal{E}(a|s)) + \sum_{b\in\mathcal{A}_{\bar a}}  Q^*(s,b) - \log(\pi^\mathcal{E}(b|s)). 
\label{eq:insert}
\end{align}
The optimal action-values for state $s$ and action $a$ are composed of immediate reward $r(s, a)$ and the optimal action-value for the next state as given by the transition model, i.e. $Q^*(s, a)=r(s, a) + \gamma \max_{a'} \mathbf{E}_{s'\sim\mathcal{M}(s,a,s')}[Q^*(s', a')]$. Using this definition in \Cref{eq:insert} to replace the Q-values and defining the difference between the log-probability and the discounted value of the next state as: \begin{align}\eta_s^a\coloneqq\log(\pi^\mathcal{E}(a|s)) - \gamma \max_{a'}\mathbf{E}_{s'\sim\mathcal{M}(s,a,s')}[Q^*(s', a')],\end{align}
we can then solve for the immediate reward:
\begin{align}
r(s, a)  &=  \eta_s^a + \frac{1}{n-1} \sum_{b\in\mathcal{A}_{\bar a}}  r(s, b) - \eta_s^b.
\label{eq:final}
\end{align}
Formulating \Cref{eq:final} for all actions $a_i \in \mathcal{A}$ results in a system of linear equations $\mathcal{X}_\mathcal{A}(s)\mathcal{R}_\mathcal{A}(s) = \mathcal{Y}_\mathcal{A}(s)$, with reward vector $\mathcal{R}_\mathcal{A}(s)$,
coefficient matrix $\mathcal{X}_\mathcal{A}(s)$ and target vector $\mathcal{Y}_\mathcal{A}(s)$:
\begin{align}
  \begin{bmatrix}
  1 & -\frac{1}{n-1} & \dots &  -\frac{1}{n-1} \\
  -\frac{1}{n-1} & 1 & \dots& -\frac{1}{n-1}\\
 \vdots&\vdots &  \ddots & \vdots\\
  -\frac{1}{n-1} &  -\frac{1}{n-1} & \dots  & 1  \\
  \end{bmatrix} 
  \begin{bmatrix}
   r(s,a_1)\\
   r(s,a_2)\\
   \vdots\\
    r(s,a_n)\\
   \end{bmatrix}
   = 
  \begin{bmatrix}
  \eta_s^{a_1} -\frac{1}{n-1}\sum_{b\in\mathcal{A}_{\overline{a_1}}} \eta_b^s\\
   \eta_s^{a_2} -\frac{1}{n-1}\sum_{b\in\mathcal{A}_{\overline{a_2}}} \eta_b^s\\
   \vdots\\
   \eta_s^{a_n} -\frac{1}{n-1}\sum_{b\in\mathcal{A}_{\overline{a_n}}} \eta_b^s\\
   \end{bmatrix}.
\label{eq:lsA}
\end{align}

\begin{theorem}
\label{thm:theo1}
There always exists a solution for the linear system provided by $\mathcal{X}_\mathcal{A}(s)$ and $\mathcal{Y}_\mathcal{A}(s)$ (proof in the appendix).
\end{theorem}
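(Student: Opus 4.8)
The plan is to analyze the structure of the coefficient matrix $\mathcal{X}_\mathcal{A}(s)$ and then verify a consistency condition on the target vector $\mathcal{Y}_\mathcal{A}(s)$. First I would observe that $\mathcal{X}_\mathcal{A}(s)$ can be written compactly as $\frac{1}{n-1}(nI - J)$, where $I$ is the $n\times n$ identity and $J$ is the all-ones matrix. Each row sums to $1 - (n-1)\cdot\frac{1}{n-1} = 0$, so the all-ones vector $\mathbf{1}$ lies in the kernel and $\mathcal{X}_\mathcal{A}(s)$ is singular. Hence existence of a solution is \emph{not} automatic, and this is precisely why the statement asserts existence rather than uniqueness: the matrix has rank $n-1$ (its nonzero eigenvalue $\frac{n}{n-1}$ occurs with multiplicity $n-1$), reflecting the usual reward ambiguity up to an additive constant per state.

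Since $\mathcal{X}_\mathcal{A}(s)$ is symmetric, the fundamental theorem of linear algebra identifies its column space with the orthogonal complement of its kernel, namely the hyperplane $\{v : \mathbf{1}^\top v = 0\}$ of zero-sum vectors. A solution therefore exists if and only if the entries of $\mathcal{Y}_\mathcal{A}(s)$ sum to zero. The key step is then to verify $\mathbf{1}^\top\mathcal{Y}_\mathcal{A}(s) = 0$ directly. Summing the $n$ components gives $\sum_i \eta_s^{a_i} - \frac{1}{n-1}\sum_i\sum_{b\in\mathcal{A}_{\overline{a_i}}}\eta_s^b$, and a double-counting argument shows that each $\eta_s^b$ occurs exactly $n-1$ times in the second double sum (once for every action $a_i$ except $a_i = b$), so that term equals $\sum_i \eta_s^{a_i}$ and the total cancels to $0$.

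An equivalent and even more direct route is to exhibit a solution explicitly: substituting $r(s,a_i) = \eta_s^{a_i}$ for every $i$ collapses the $i$-th equation to the identity $\eta_s^{a_i} - \frac{1}{n-1}\sum_{b\in\mathcal{A}_{\overline{a_i}}}\eta_s^b$, which is exactly the $i$-th entry of $\mathcal{Y}_\mathcal{A}(s)$; thus $\mathcal{R}_\mathcal{A}(s) = (\eta_s^{a_1},\dots,\eta_s^{a_n})^\top$ is a witness, and the full solution set is this vector plus any multiple of $\mathbf{1}$. The main obstacle is not any single calculation but recognizing the singularity at the outset: one must resist treating $\mathcal{X}_\mathcal{A}(s)$ as invertible and instead either track the range condition or produce the explicit witness. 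Once the kernel is identified as $\mathrm{span}(\mathbf{1})$, both the consistency check and the explicit-solution check are short.
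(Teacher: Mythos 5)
Your proposal is correct, and its first half is essentially the paper's argument in more explicit form: the paper establishes $\operatorname{rank}\mathcal{X}_\mathcal{A}(s)=n-1$ via row reduction and then notes that a linear combination (the plain sum) of the entries of $\mathcal{Y}_\mathcal{A}(s)$ vanishes, so the augmented matrix has the same rank and Rouch\'e--Capelli gives existence. You reach the same two facts by a cleaner route --- the decomposition $\mathcal{X}_\mathcal{A}(s)=\tfrac{1}{n-1}(nI-J)$ gives the spectrum and hence the rank immediately, the symmetry of $\mathcal{X}_\mathcal{A}(s)$ identifies the range with $\{v:\mathbf{1}^\top v=0\}$, and your double-counting argument makes precise the paper's ``it can easily be shown'' step. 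Where you genuinely depart from the paper is the closing observation that $\mathcal{R}_\mathcal{A}(s)=(\eta_s^{a_1},\dots,\eta_s^{a_n})^\top$ is an explicit solution: substituting $r(s,a_i)=\eta_s^{a_i}$ reproduces each entry of $\mathcal{Y}_\mathcal{A}(s)$ verbatim, so existence follows in one line with no rank computation or consistency check at all, and you also get the full solution set $\{\mathcal{R}_\mathcal{A}(s)+c\mathbf{1}\}$ for free, which cleanly exhibits the per-state additive reward ambiguity. The paper's route buys a statement that generalizes to any right-hand side satisfying the zero-sum condition; your witness buys brevity and a concrete characterization of all solutions. Both are sound; if anything, the explicit witness is the proof one would prefer to present.
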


Intuitively, this formulation of the immediate reward encodes the local probability of action $a$ while also ensuring the probability of the maximizing next action under Q-learning. Hence, we note that this formulation of bootstrapping visitation frequencies bears a strong resemblance to the Successor Feature Representation \cite{dayan93successor, kulkarni2016successor}. The Q-function can then be updated via the standard state-action Bellman optimality equation $Q^*(s, a) = r(s, a) + \gamma \max_{a'} \mathbf{E}_{s'\sim\mathcal{M}(s,a,s')}[Q^*(s', a')]$,
for all states $s$ and actions $a$. Since $Q^*$ is unknown, however, we cannot estimate $r(s, a)$ directly. We circumvent this necessity by estimating $r(s', a)$ for all terminal states $s'$, i.e. states for which no next state exists. Going through the MDP once in reverse topological order based on its model $\mathcal{M}$, we can compute $Q^*$ for the succeeding states and actions, leading to a reward function for which the induced optimal action-value function yields a Boltzmann distribution matching the true distribution of actions \textit{exactly}. Hence, if the observed transitions are samples from the true optimal Boltzmann distribution, we can recover the \textit{true} reward function of the MDP in \textit{closed-form}. In case of an infinite control problem or if no clear reverse topological order exists, we solve the MDP by iterating multiple times until convergence.

\section{Tabular Inverse Q-learning}
\label{sec:IQL}
To relax the assumption of an existing transition model and action probabilities, we extend the Inverse Action-value Iteration to a sampling-based algorithm. For every transition $(s, a, s')$, we update the reward function based on \Cref{eq:final} using stochastic approximation:
\begin{align}
r(s, a)  &\leftarrow  (1 - \alpha_r) r(s,a)+ \alpha_r \left( \eta_s^a + \frac{1}{n-1} \sum_{b\in\mathcal{A}_{\bar a}}  r(s, b) - \eta_s^b \right),
\label{eq:itq}
\end{align}
with learning rate $\alpha_r$. Additionally, we need a state-action visitation counter $\rho(s, a)$ for state-action pairs to calculate their respective log-probabilities: $\tilde{\pi}^\mathcal{E}(a|s)  \coloneqq \rho(s, a) / \sum_{A \in \mathcal{A}} \rho(s, A).$
Therefore, we approximate $\eta_s^a$ by 
$\tilde{\eta}_s^a \coloneqq \log(\tilde{\pi}^\mathcal{E}(a|s)) - \gamma\max_{a'}\mathbf{E}_{s'\sim\mathcal{M}(s,a,s')}[Q^*(s', a')].
$
In order to avoid the need of a model $\mathcal{M}$, we evaluate all other actions via Shifted Q-functions as in \cite{composite_q}: 
\begin{align}
    Q^\text{Sh}(s,a) \coloneqq \gamma\max_{a'}\mathbf{E}_{s'\sim\mathcal{M}(s,a,s')}[Q^*(s', a')],
\end{align}
i.e. $Q^\text{Sh}(s,a)$ skips the immediate reward for taking $a$ in $s$ and only considers the discounted Q-value of the next state $s'$ for the maximizing action $a'$. We then formalize $\tilde{\eta}_s^a$ as:
\begin{align}
  \tilde{\eta}_s^a &\coloneqq\log(\tilde{\pi}^\mathcal{E}(a|s)) - \gamma \max_{a'} \mathbf{E}_{s'\sim\mathcal{M}(s,a,s')}[Q^*(s', a')]= \log(\tilde{\pi}^\mathcal{E}(a|s)) - Q^{\text{Sh}}(s, a).
 \end{align} 
Combining this with the count-based approximation $\tilde{\pi}^\mathcal{E}(a|s)$ and updating $Q^\text{Sh}$, $r$, and $Q$ via stochastic approximation yields the model-free \textit{Tabular Inverse Q-learning} algorithm (cf. \Cref{alg:invtabmodfq}).

\begin{algorithm}
  initialize $r$, $Q$ and $Q^\text{Sh}$ and  state-action visitation counter $\rho$\\
  \For{$episode=1..E$} {
    get initial state $s_1$\\
    \For{$t=1..T$}{
        observe action $a_t$ and next state $s_{t+1}$, increment counter $\rho(s_t,a_t) = \rho(s_t,a_t) + 1$\\
        get probabilities $\tilde{\pi}^\mathcal{E}(a|s_t)$ for state $s_t$ and all $a\in\mathcal{A}$ from $\rho$\\
        update $Q^\text{Sh}$ by $Q^\text{Sh}(s_t, a_t)\leftarrow (1-\alpha_{\text{Sh}})Q^\text{Sh}(s_t, a_t) + \alpha_{\text{Sh}}\left(\gamma\max_a Q(s_{t+1}, a)\right)$\\
        calculate for all actions $a\in\mathcal{A}$: $\tilde{\eta}_{s_t}^a=\log(\tilde{\pi}^\mathcal{E}(a|s_t))-Q^\text{Sh}(s_t, a)$\\
        update $r$ by $r(s_t, a_t)  \leftarrow  (1 - \alpha_r) r(s_t,a_t)+ \alpha_r ( \eta_{s_t}^{a_t} + \frac{1}{n-1} \sum_{b\in\mathcal{A}_{\overline{a_t}}}  r(s_t, b) - \eta_{s_t}^b)$\\
        update $Q$ by $Q(s_t, a_t)\leftarrow (1-\alpha_Q)Q(s_t, a_t) + \alpha_Q(r(s_t,a_t) + \gamma\max_a Q(s_{t+1}, a))$\\
    }
  }
\caption{Tabular Inverse Q-learning}
\label{alg:invtabmodfq}
\end{algorithm}

\section{Deep Inverse Q-learning}
\label{sec:diql}
To cope with continuous state-spaces, we now introduce a variant of \IQL with function approximation. We estimate reward function $r$ with function approximator $r(\cdot, \cdot|\theta^r)$, parameterized by $\theta^r$. The same holds for $Q$ and $Q^\text{Sh}$, represented by $Q(\cdot, \cdot|\theta^Q)$ and $Q^\text{Sh}(\cdot, \cdot|\theta^{\text{Sh}})$ with parameters $\theta^Q$ and $\theta^{\text{Sh}}$. To alleviate the problem of moving targets, we further introduce target networks for $r(\cdot, \cdot|\theta^r)$, $Q(\cdot, \cdot|\theta^Q)$ and $Q
^\text{Sh}(\cdot, \cdot|\theta^{\text{Sh}})$, denoted by $r'(\cdot, \cdot|\theta^{r\prime})$, $Q'(\cdot, \cdot|\theta^{Q\prime})$ and $Q
^{\text{Sh}\prime}(\cdot, \cdot|\theta^{{\text{Sh}\prime}})$ and parameterized by $\theta^{r\prime}$, $\theta^{Q\prime}$ and $\theta^{{\text{Sh}\prime}}$, respectively. Each collected transition $(s_t, a_t, s_{t+1})$, either online or in a fixed batch, is stored in replay buffer $\mathcal{D}$. We then sample minibatches $(s_i, a_i, s_{i+1})_{1\leq i \leq m}$ from $\mathcal{D}$ to update the parameters of our function approximators. First, we calculate the target for the Shifted Q-function: \begin{align}y^\text{Sh}_i\coloneqq\gamma\max_a Q'(s_{i+1},a|\theta^{Q\prime}),\end{align} and then apply one step of gradient descent on the mean squared error to the respective predictions, i.e. $\mathcal{L}(\theta^\text{Sh})=\frac{1}{m}\sum_i (Q^\text{Sh}(s_i, a_i|\theta^{\text{Sh}}) - y^\text{Sh}_i)^2$. We approximate the state-action visitation by classifier $\rho(\cdot, \cdot|\theta
^\rho)$, parameterized by $\theta^\rho$ and with linear output. Applying the softmax on the outputs of $\rho(\cdot, \cdot|\theta
^\rho)$ then maps each state $s$ to a probability distribution over actions $a_j|_{1\leq j\leq n}$. Classifier $\rho(\cdot, \cdot|\theta^\rho)$ is trained to minimize the cross entropy between its induced probability distribution and the corresponding targets, i.e.: $\mathcal{L}(\theta^\rho)=\frac{1}{m}\sum_i -\rho(s_i, a_i)+\log \sum_{j\neq i}\exp{\rho(s_i, a_j)}.$ Given the predictions\footnote{For numerical stability, we clip log-probabilities and update actions if $\tilde{\pi}^\mathcal{E}(a|s)>\epsilon$, where $\epsilon\ll1$.} of $\rho(\cdot, \cdot|\theta^\rho)$, we can calculate targets $y^r_i$ for reward estimation $r(\cdot, \cdot|\theta^r)$ by: \begin{align}y^r_i\coloneqq\tilde{\eta}_{s_i}^{a_i}+\frac{1}{n-1}\sum\limits_{b\in\mathcal{A}_{\overline{a_i}}}r'(s_i, b|\theta^{r\prime}) - \tilde{\eta}^b_{s_i},\end{align}
and apply gradient descent on the mean squared error $\mathcal{L}(\theta^r)=\frac{1}{m}\sum_i (r(s_i, a_i|\theta^r) - y^r_i)^2$. Lastly, we can perform a gradient step on loss $\mathcal{L}(\theta^Q)=\frac{1}{m}\sum_i (Q(s_i, a_i|\theta^Q) - y^Q_i)^2$, with targets: $y^Q_i=r'(s_i, a_i|\theta^{r\prime})+\gamma\max_a Q'(s_{i+1}, a|\theta^{Q\prime}),$ to update the parameters $\theta^Q$ of $Q(\cdot, \cdot|\theta^Q)$. We update target networks by Polyak averaging, i.e. $\theta^{\text{Sh}\prime} \leftarrow (1-\tau) \theta^{\text{Sh}\prime} + \tau \theta^\text{Sh}$, $\theta^{r\prime} \leftarrow (1-\tau) \theta^{r\prime} + \tau \theta^{r}$ and $\theta^{Q\prime} \leftarrow (1-\tau) \theta^{Q\prime} + \tau \theta^{Q}$. Details of \textit{Deep Inverse Q-learning} can  be found in \Cref{alg:invdeepmodfq}.

\begin{algorithm}
  \SetKwInput{KwInput}{input} 
  \KwInput{replay buffer $\mathcal{D}$}
  initialize networks $r(\cdot, \cdot|\theta^r)$, $Q(\cdot, \cdot|\theta^Q)$ and $Q^\text{Sh}(\cdot, \cdot|\theta^{\text{Sh}})$ and    classifier $\rho(\cdot, \cdot|\theta^\rho)$\\
  initialize target networks $r'(\cdot, \cdot|\theta^{r\prime})$, $Q'(\cdot, \cdot|\theta^{Q\prime})$ and $Q^{\text{Sh}\prime}(\cdot, \cdot|\theta^{{\text{Sh}\prime}})$\\

  \For{$iteration=1..I$} {
    sample minibatch $\mathcal{B}=(s_i, a_i, s_{i+1})_{1\leq i \leq m}$ from $\mathcal{D}$\\
    minimize MSE between predictions of $Q^\text{Sh}$ and $y^\text{Sh}_i=\gamma\max_a Q'(s_{i+1}, a|\theta^{Q\prime})$\\
    minimize $\mathcal{CE}$ between predictions of $\rho$ and actions $a_i$\\
    get probabilities $\tilde{\pi}^\mathcal{E}(a|s_i)$ for state $s_i$ and all $a\in\mathcal{A}$ from $\rho(s_i, a|\theta^\rho)$\\
    calculate for all actions $a\in\mathcal{A}$: $\tilde{\eta}_{s_i}^a=\log(\tilde{\pi}^\mathcal{E}(a|s_i))-Q^{\text{Sh}\prime}(s_i, a|\theta^{{\text{Sh}\prime}})$\\
    minimize MSE between predictions of $r$ and $y^r_i=\tilde{\eta}_{s_i}^{a_i}+\frac{1}{n-1}\sum_{b\in\mathcal{A}_{\overline{a_i}}}r'(s_i, b|\theta^{r\prime}) - \tilde{\eta}^b_{s_i}$\\
    minimize MSE between predictions of $Q$ and $y^Q_i=r'(s_i, a_i|\theta^{r\prime})+\gamma\max_a Q'(s_{i+1}, a|\theta^{Q\prime})$\\
    update target networks $r'$, $Q'$ and $Q^{\text{Sh}\prime}$\\
  }
\caption{Fixed Batch Deep Inverse Q-learning}
\label{alg:invdeepmodfq}
\end{algorithm}

\section{Deep Constrained Inverse Q-learning}
\label{sec:DCIQL}
Following the definition of Constrained Q-learning in \cite{kalweit2020interpretable}, we extend \IQL to incorporate a set of constraints $\mathcal{C}  = \{ c_i:\mathcal{S} \times \mathcal{A} \rightarrow \mathbb{R} | 1 \le i \le C \}$ shaping the space of safe actions in each state. We define the safe set for constraint $c_i$ as $S_{c_i}(s) = \{a \in \mathcal{A} | \  c_i(s, a) \le \beta_{c_i} \}$, where $\beta_{c_i}$ is a constraint-specific threshold, and $S_\mathcal{C}(s)$ as the intersection of all safe sets.
In addition to the Q-function in IQL, we estimate a constrained Q-function $Q^\mathcal{C}$ by:
 \begin{align}Q^\mathcal{C}(s, a) \leftarrow (1-\alpha_{Q^\mathcal{C}}) Q^\mathcal{C}(s, a) + \alpha_{Q^\mathcal{C}} \left(r(s, a) + \gamma \max_{a'\in S_\mathcal{C}(s')} Q^\mathcal{C}(s', a')\right).\end{align}
For policy extraction from $Q^\mathcal{C}$ after Q-learning, only the action-values of the constraint-satisfying actions must be considered. As shown in \cite{kalweit2020interpretable}, this formulation of the Q-function optimizes constraint satisfaction on the long-term and yields the optimal action-values for the induced constrained MDP. Put differently, including constraints directly in IQL leads to optimal constrained imitation from unconstrained demonstrations. Analogously to \textit{Deep Inverse Q-learning} in \Cref{sec:diql}, we can approximate $Q^\mathcal{C}$ with function approximator $Q^\mathcal{C}(\cdot, \cdot|\theta^\mathcal{C})$ and associated target network $Q^{\mathcal{C}\prime}(\cdot, \cdot|\theta^{\mathcal{C}\prime})$. We call this algorithm \textit{Deep Constrained Inverse Q-learning}, see pseudocode in the appendix.

\section{Experiments}
\label{sec:experiments}
We evaluate the performance of \nIAVI, \IQL and \DIQL on the common IRL Objectworld benchmark (\Cref{ref:objbench}) and compare to MaxEnt IRL \cite{Ziebart2008MaximumEI} (closest to IAVI of all entropy-based IRL approaches) and Deep MaxEnt IRL \cite{DBLP:journals/corr/WulfmeierOP15} (which is able to recover non-linear reward functions). We then show the potential of constrained imitation by \DCIQL on a more complex highway scenario in the open-source traffic simulator SUMO \cite{sumo} (\Cref{ref:sumobench}).

\begin{figure}[h]
    \centering
  \begin{subfigure}{0.48\textwidth}
     \includegraphics[width=\textwidth]{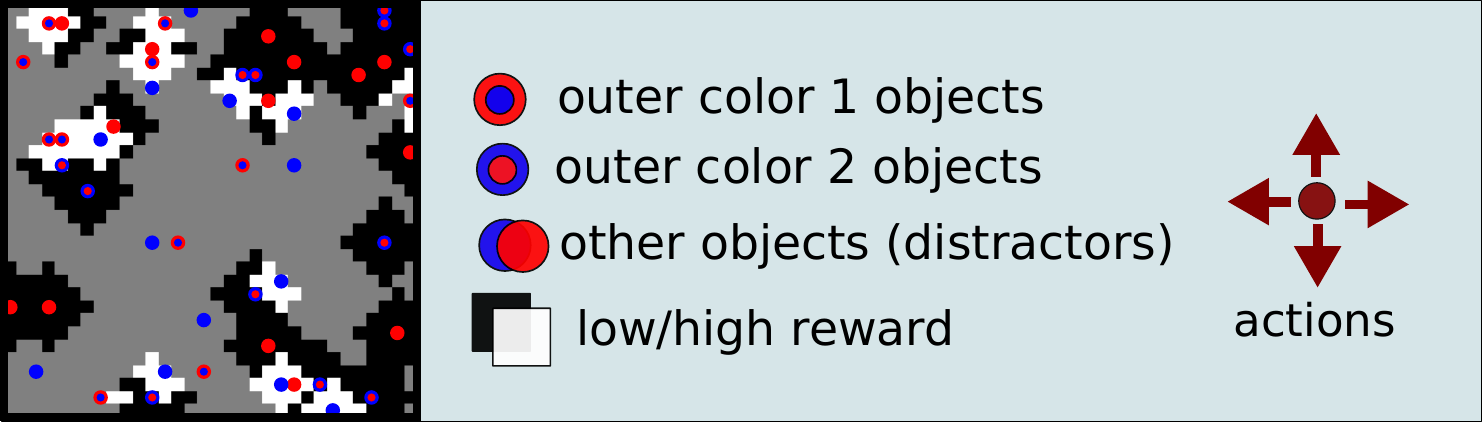}
     \caption{}
     \label{ref:objbench}
   \end{subfigure} 
  \begin{subfigure}{0.48\textwidth}
      \includegraphics[width=\textwidth]{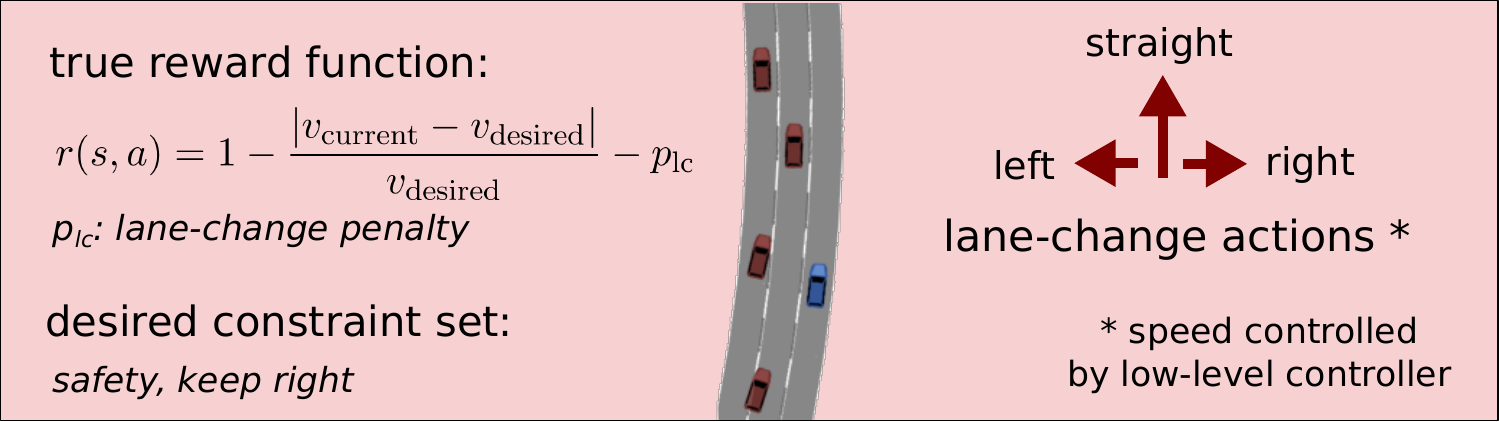}
     \caption{}
     \label{ref:sumobench}
 \end{subfigure}
    \caption{Environments for evaluation of our inverse reinforcement learning algorithms: (a) Objectworld benchmark and (b) autonomous lane-changes in the SUMO traffic simulator.}
\end{figure}

\subsection{Objectworld Benchmark}
\label{subsec:objw}
The Objectworld environment \cite{NIPS2011_4420} is an $N \times N$ map, where an agent can choose between going \textit{up}, \textit{down}, \textit{left} or \textit{right} or to \textit{stay} in place per time step. Stochastic transitions take the agent in a random direction with 30\% chance. Objects are randomly put on the grid with certain inner and outer colors from a set of $C$ colors. In this work, we use the continuous feature representation, which includes $2C$ binary features, indicating the minimum distance to the nearest object with a specific inner and outer color. In the experiments, we use $N=32$ and $50$ objects and learn on expert trajectories of length $8$. As measure of performance, we use the \emph{expected value difference} (EVD) metric, originally proposed in \cite{NIPS2011_4420}. It represents how suboptimal the learned policy is under the true reward. Therefore, we compute the state-value under the true reward for the true policy and subtract the state-value under the true reward for the optimal policy w.r.t. the learned reward. Since our derivation assumes the expert to follow a Boltzmann distribution over optimal Q-values, we choose the EVD to be based on the more general stochastic policies. Architectures and hyperparameters are shown in the appendix. We compare the learned state-value function of \nIAVI, \IQL and MaxEnt IRL (vanilla and with a single inner step of VI) trained on a dataset with 1.7M transitions that has an action distribution equivalent to the true underlying Boltzmann distribution for a random Objectworld environment. 

The results averaged over five training runs are shown in \Cref{fig:objectworld_full_sampling}. All approaches are trained until convergence (difference of learned reward $< 10^{-4}$ between iterations). \IAVI matches the ground truth distribution almost exactly with an EVD of 0.09 (due to the infinite control problem there are slight deviations at this point of convergence), while \IQL shows a low mean EVD of 1.47 and the MaxEnt IRL methods 11.58 and 4.33, respectively. Additionally, \IAVI and IQL have \emph{tremendously} lower runtimes than MaxEnt IRL, with \SI{1.77}{\minute}  and  \SI{21.06}{\minute} compared to \SI{8.08}{\hour}. This illustrates the dramatic effect of \IAVI not needing an inner loop, in contrast to MaxEnt IRL, which has to compute expected state visitation frequencies of the optimal policy repeatedly. We further compare with a variant of MaxEnt IRL with only a single inner step of value-iteration (motivated by the approximation in \cite{DBLP:conf/icml/FinnLA16}). Though this variant speeds up the time per iteration considerably, it has a runtime of \SI{12.2}{\hour} due to a much higher amount of required iterations until convergence.
\begin{figure}[t]
    \centering
    \includegraphics[width=0.84\textwidth]{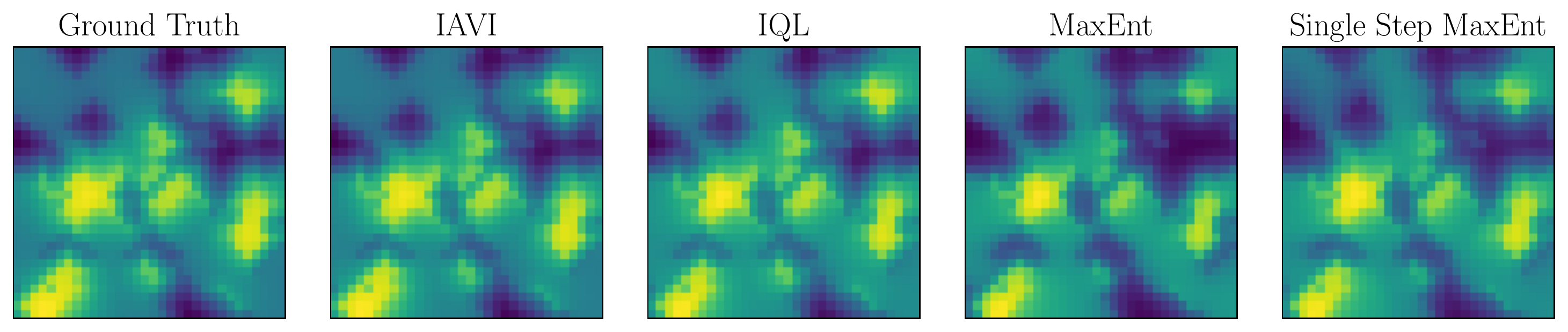}
    \begin{tabular}{ccccc}
        \toprule
        &IAVI&IQL&MaxEnt&Single Step MaxEnt\\
        \midrule
         EVD&$\bm{0.09\pm0.00}$&$\bm{1.47\pm0.14}$&$11.58\pm0.00$&$4.33\pm0.00$\\
         Runtime&$ \     \bm{0.03\pm}$\textbf{\SI{0.0}{\hour}}& \  $\bm{0.35\pm}$\textbf{\SI{0.0}{\hour}}&\ \  \ $8.08\pm$\SI{1.0}{\hour}& \  $12.2\pm$\SI{0.8}{\hour}\\ 
        \bottomrule
    \end{tabular}
    \caption{Results for the Objectworld environment, given a data set with an action distribution equivalent to the true optimal Boltzmann distribution.
    Visualization of the true and learned state-value functions (top). Resulting expected value difference and time needed until convergence, mean and standard deviation over 5 training runs on a \SI{3.00}{\GHz} CPU (bottom).}
    \label{fig:objectworld_full_sampling}
\end{figure}
A performance comparison for different numbers of expert demonstrations is shown in \Cref{fig:objectworld_evd}. While MaxEnt IRL shows the worst performance of all approaches, Deep MaxEnt IRL generalizes very well for 8 to 256 trajectories, but shows high variance and a significant increase of the EVD for 512 trajectories. Because \IAVI and \IQL are tabular and converging to match the action distribution of the expert exactly, the algorithms need more samples than Deep MaxEnt IRL to achieve an EVD close to 0.0 as the action distribution is converging to the true underlying Boltzmann distribution of the optimal policy. Our algorithms start to outperform both MaxEnt methods for more than 256 trajectories and show stable results with low variance. In the Objectworld experiments, we evaluate \DIQL using the true action distribution and employ function approximation only to estimate Q- and reward values for a fair comparison. Improved generalization can be achieved via approximation of the action distribution, which we show in the application of autonomous driving.

\begin{figure}[h]
    \centering
     \includegraphics[width=0.8\textwidth]{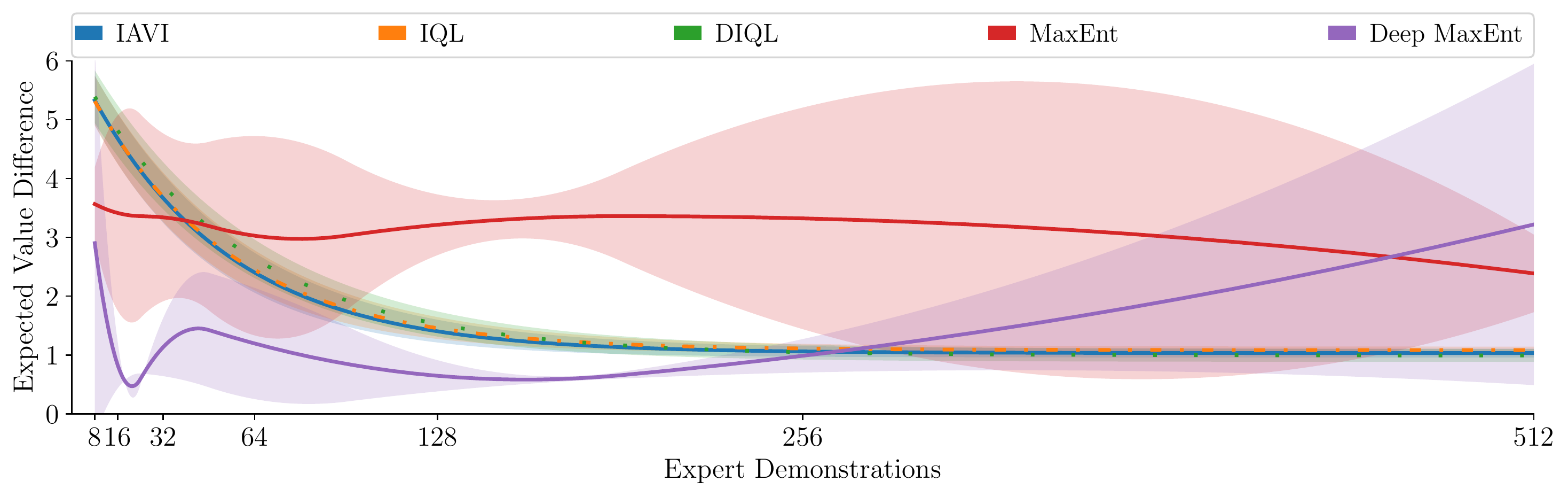}
    \caption{Mean EVD and SD over 5 runs for different numbers of demonstrations in Objectworld.}
    \label{fig:objectworld_evd}
\end{figure}

\begin{figure}[t]
\begin{subfigure}{0.5\textwidth}
       \includegraphics[width=.95\textwidth]{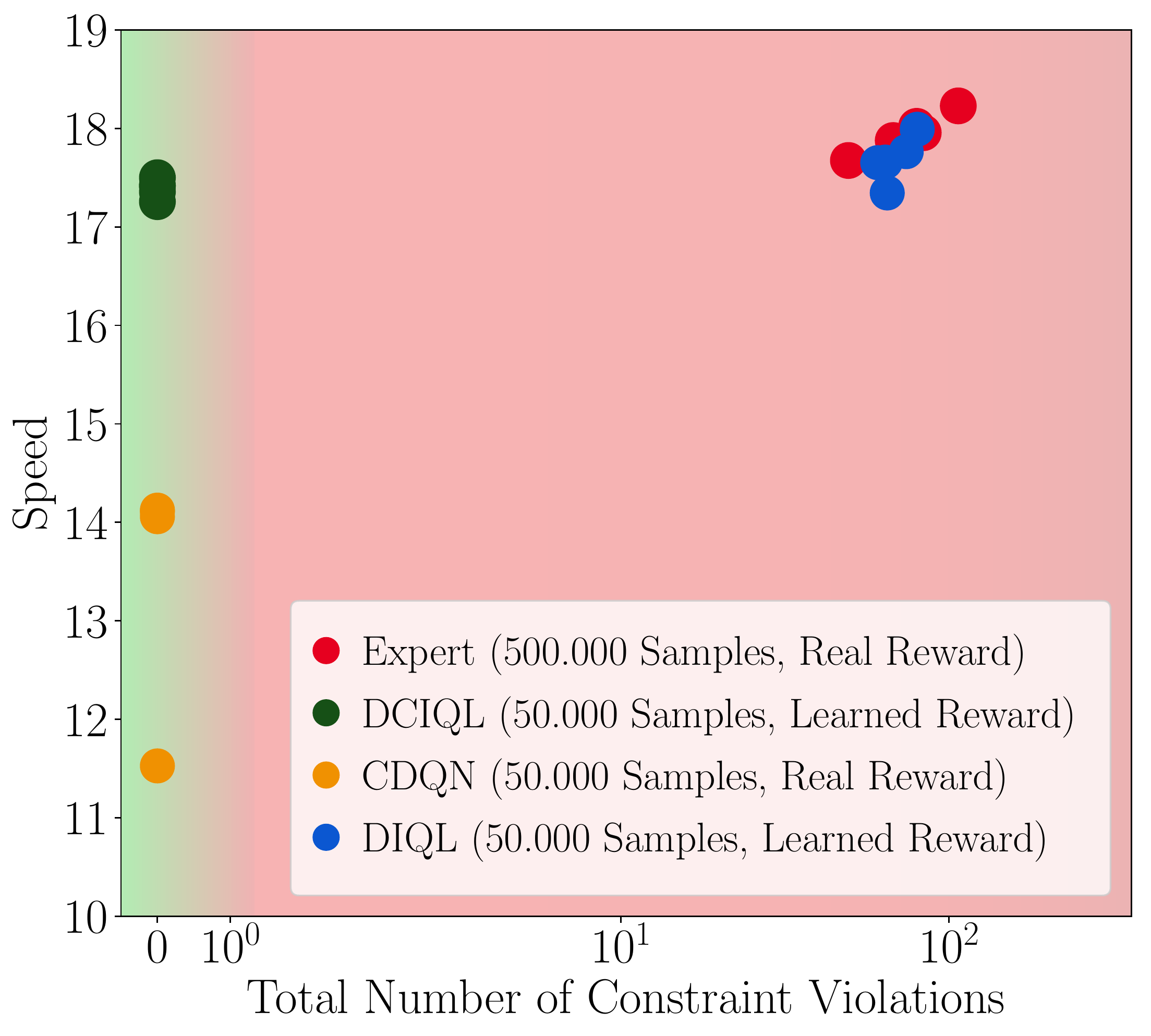}
    \caption{}
    \label{fig:sumores}
    \end{subfigure}
 \begin{subfigure}{0.45\textwidth}
    \centering
    \vspace*{0.2cm}
    \includegraphics[width=.98\textwidth]{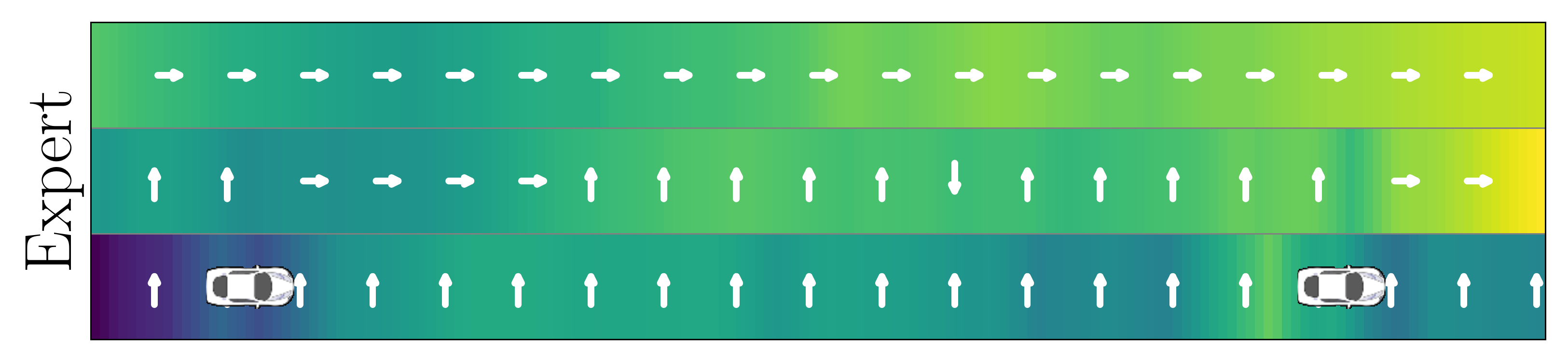}\\
    \includegraphics[width=.98\textwidth]{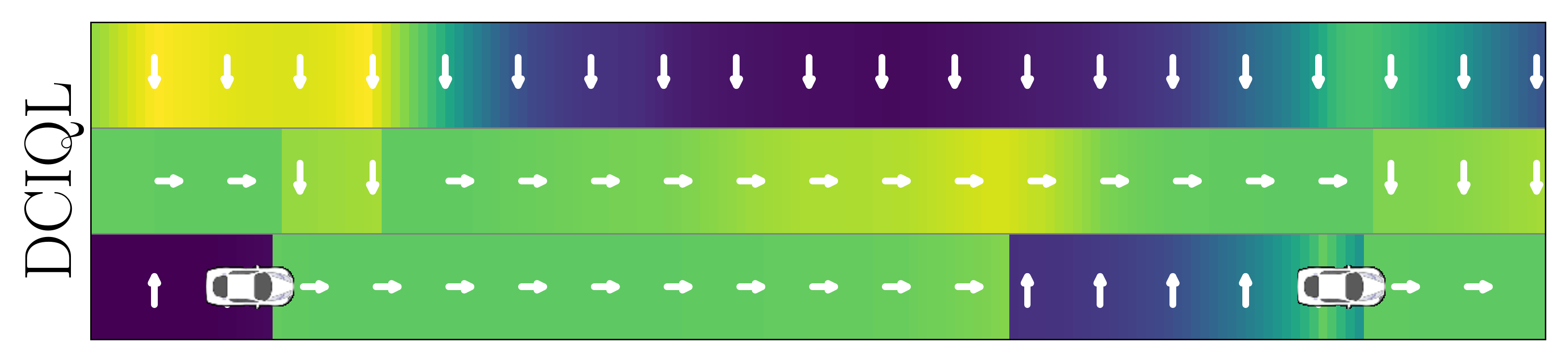}\\
   \includegraphics[width=\textwidth]{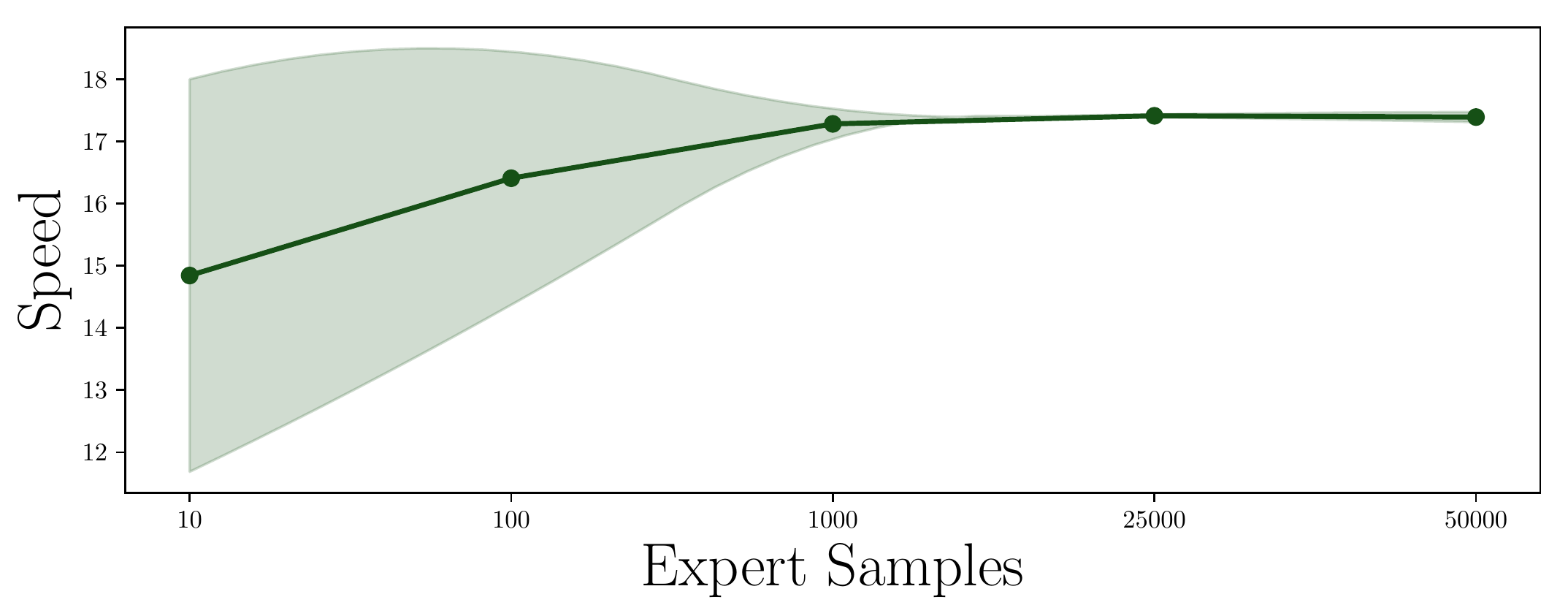}
    \vspace*{0.0cm}
    \caption{}
    \label{fig:svsumo}
    \end{subfigure}
    
 \caption{Results for \DCIQL in the autonomous driving task. (a) Mean speed and constraint violations of the agents for 5 training runs evaluated on the German highway scenario with keep-right rule in SUMO. The desired speed of the agents is \SI{20}{\metre\per\second}. (b) Discretized state-value estimations (yellow: high and blue: low) for the expert policy and \DCIQL trained on 50.000 samples for an exemplary scenario with two surrounding vehicles (top). Mean speed and standard deviation of DCIQL for different numbers of expert demonstrations over 5 training runs (bottom).}
\end{figure}

\subsection{High-level Decision Making for Autonomous Driving}
\label{sec:sumo}
We apply Deep Constrained Inverse Q-Learning (\nDCIQL) to learn autonomous lane-changes on highways from demonstrations. \DCIQL allows for long-term optimal constrained imitation while always satisfying a given set of constraints, such as traffic rules. In this setup, we transfer driving styles collected in SUMO from an agent trained without constraints to a different scenario where all vehicles, including the agent, ought to keep right. This corresponds to the task of transferring driving styles from US to German highways by including a keep-right constraint referring to a traffic rule in Germany where drivers ought to drive right when there is a gap of at least \SI{20}{s} under the current velocity. We train on highway scenarios with a \SI{1000}{\metre} three-lanes highway and random numbers of vehicles and driver types. For evaluation, we sample 20 scenarios \textit{a priori} for each $n \in (30 , 35, \dots, 90)$ vehicles on track to account for the inherent stochasticity of the task. We use simulator setting, state representation, reward function and action space as proposed in \cite{DeepSetQ}. The state-space consists of the relative distances, velocities and relative lane indices of all surrounding vehicles. The action space consists of a discrete set of actions in lateral direction: \textit{keep lane}, \textit{left lane-change} and \textit{right lane-change}. Acceleration and collision avoidance are controlled by low-level controllers. As expert, we use a fully-trained DeepSet-Q agent as described in \cite{DeepSetQ}. The reward function of the agent encourages driving smoothly and as close as possible to a desired velocity, as depicted in \Cref{ref:sumobench}. For all agents and networks, we use the architecture proposed in \cite{DeepSetQ}, which can deal with a variable number of surrounding vehicles. Details of the architecture can be found in the appendix. We trained \DCIQL on $5\cdot10^4$ samples of the expert for $10^5$ iterations. The mean velocities and total number of constraint violations over all scenarios  are shown in \Cref{fig:sumores} for the Expert-DQN agent and $\DCIQL$ for five training runs. $\DCIQL$ (green) shows only a minor loss in performance compared to the expert (red) while satisfying the Keep Right rule, leading to a total number of 0 constraint violations for all scenarios. A video of the agent can be found at \url{https://youtu.be/5m21ibhWcXw}. The $\DIQL$ agent is imitating the expert agent well and achieves almost similar performance whilst also trained for only $10^5$ iterations on $5\cdot10^4$ samples. We further trained a \CDQN agent as described in \cite{kalweit2020interpretable} in the same manner as the \DCIQL agent, but on the true underlying reward function of the MDP. The \CDQN agent shows a much worse performance for the same number of iterations. This emphasizes the potential of our learned reward function, offering much faster convergence without the necessity of hand-crafted reward engineering. Compared to our learned reward, the agent trained on the true reward function has a higher demand for training samples and requires more iterations to achieve a well-performing policy.  
\Cref{fig:svsumo} (top) shows the state-value estimations for all possible positions of the agent in a scene of \SI{160}{m} with two surrounding vehicles. While the Expert-DQN agent was not trained to include the Keep Right constraint, the \DCIQL agent is satisfying the Keep Right and Safety constraints while still imitating to overtake the other vehicles in an anticipatory manner. \Cref{fig:svsumo} (bottom) depicts the performance of \DCIQL for different numbers of expert demonstrations. \DCIQL shows excellent and robust performance with only 1000 provided samples, which corresponds to approximately \SI{1/2}{\hour} of driving demonstrations in simulation. Since the reward learned in \DCIQL incorporates long-term information by design, the algorithm is much more data-efficient and thus a very promising approach for the application on real systems. 

\section{Conclusion}
\label{sec:conclusion}
We introduced the novel Inverse Action-value Iteration algorithm and an accompanying class of sampling-based variants that provide the first combination of IRL and Q-learning. Our approach needs to solve the MDP underlying the demonstrated behavior only \textit{once} leading to a speedup of up to several orders of magnitude compared to the popular Maximum Entropy IRL algorithm and some of its variants. In addition, it can accommodate arbitrary non-linear reward functions. Interestingly, our results show that the reward representation we learn through IRL with our approach can lead to significant speedups compared to standard RL training with the true immediate reward function, which we hypothesize to result from the bootstrapping formulation of state-action visitations in our IRL formulation, suggesting a strong link to successor features \cite{dayan93successor,kulkarni2016successor}. We presented model-free variants and showed how to extend the algorithms to continuous state-spaces with function approximation. Our deep constrained IRL algorithm \DCIQL is able to guarantee satisfaction of constraints on the long-term for optimal constrained imitation, even if the original demonstrations violate these constraints. For the application of learning autonomous lane-changes on highways, we showed that our approach can learn competent driving strategies satisfying desired constraints at all times from data that corresponds to 30 minutes of driving demonstrations.

\section*{Broader Impact}

Our work contributes an advancement in Inverse Reinforcement Learning (IRL) methods that can be used for imitation learning. Importantly, they  enable non-expert users to program robots and other technical devices merely by demonstrating a desired behavior. On the one hand, this is a crucial requirement for realising visions such as Industry 4.0, where people increasingly work alongside flexible and lightweight robots and both have to constantly adapt to changing task requirements. This is expected to boost productivity, lower production costs, and contribute to bringing back local jobs that were lost due to globalization strategies. Since the IRL approach we present can incorporate and enforce constraints on behavior even if the demonstrations violate them, it has interesting applications in safety critical applications, such as creating safe vehicle behaviors for automated driving. On the other hand, the improvements we present can potentially accelerate existing trends for automation, requiring less and less human workers if they can be replaced by flexible and easily programmable robots. Estimates for the percentage of jobs at risk for automation range between 14\% (OECD report, \cite{nedelkoska2018automation}) and 47\% \cite{frey2017future} of available jobs (also depending on the country in question). Thus, our work could potentially add to the societal challenge to find solutions that mitigate these consequences (such as e.g. re-training, continuing education, universal basic income, etc.) and make sure that affected individuals remain active, contributing, and self-determined members of society. While it has been argued that IRL methods are essential for value-alignment of artificial intelligence agents \cite{russell2015research, abel2016reinforcement}, the standard framework might not cover all aspects necessary \cite{arnold2017value}. Our Deep Constrained Inverse Q-learning approach, however, improves on this situation by providing a means to enforce additional behavioral rules and constraints to guarantee behavior imitation consistent within moral and ethical frames of society.

\balance
\bibliographystyle{plainnat}
\bibliography{closed-form_irl}

\begin{thebibliography}{30}
\providecommand{\natexlab}[1]{#1}
\providecommand{\url}[1]{\texttt{#1}}
\expandafter\ifx\csname urlstyle\endcsname\relax
  \providecommand{\doi}[1]{doi: #1}\else
  \providecommand{\doi}{doi: \begingroup \urlstyle{rm}\Url}\fi

\bibitem[Abbeel and Ng(2004)]{abbeel2004apprenticeship}
Pieter Abbeel and Andrew~Y. Ng.
\newblock Apprenticeship learning via inverse reinforcement learning.
\newblock In \emph{{ICML}}, volume~69 of \emph{{ACM} International Conference
  Proceeding Series}. {ACM}, 2004.

\bibitem[Abel et~al.(2016)Abel, MacGlashan, and Littman]{abel2016reinforcement}
David Abel, James MacGlashan, and Michael~L Littman.
\newblock Reinforcement learning as a framework for ethical decision making.
\newblock In \emph{Workshops at the thirtieth AAAI conference on artificial
  intelligence}, 2016.

\bibitem[Arnold et~al.(2017)Arnold, Kasenberg, and Scheutz]{arnold2017value}
Thomas Arnold, Daniel Kasenberg, and Matthias Scheutz.
\newblock Value alignment or misalignment--what will keep systems accountable?
\newblock In \emph{Workshops at the Thirty-First AAAI Conference on Artificial
  Intelligence}, 2017.

\bibitem[Arora and Doshi(2018)]{Arora2018ASO}
Saurabh Arora and Prashant Doshi.
\newblock A survey of inverse reinforcement learning: Challenges, methods and
  progress.
\newblock \emph{ArXiv}, abs/1806.06877, 2018.

\bibitem[Author(s)(2019{\natexlab{a}})]{DeepSetQ}
Anonymous Author(s).
\newblock Dynamic input for deep reinforcement learning in autonomous driving.
\newblock In \emph{the supplementary}, 2019{\natexlab{a}}.

\bibitem[Author(s)(2019{\natexlab{b}})]{composite_q}
Anonymous Author(s).
\newblock Off-policy multi-step q-learning.
\newblock In \emph{the supplementary}, 2019{\natexlab{b}}.

\bibitem[Author(s)(2020)]{kalweit2020interpretable}
Anonymous Author(s).
\newblock Interpretable multi time-scale constraints in model-free deep
  reinforcement learning for autonomous driving.
\newblock In \emph{the supplementary}, 2020.

\bibitem[Boularias et~al.(2011)Boularias, Kober, and
  Peters]{DBLP:journals/jmlr/BoulariasKP11}
Abdeslam Boularias, Jens Kober, and Jan Peters.
\newblock Relative entropy inverse reinforcement learning.
\newblock In Geoffrey~J. Gordon, David~B. Dunson, and Miroslav Dud{\'{\i}}k,
  editors, \emph{Proceedings of the Fourteenth International Conference on
  Artificial Intelligence and Statistics, {AISTATS} 2011, Fort Lauderdale, USA,
  April 11-13, 2011}, volume~15 of \emph{{JMLR} Proceedings}, pages 182--189.
  JMLR.org, 2011.

\bibitem[Capelli(1892)]{capelli}
A.~Capelli.
\newblock Sopra la compatibilitá o incompatibilitá di più equazioni di primo
  grado fra picì incognite.
\newblock \emph{Revista di Mathematica}, 2:\penalty0 54--58, 1892.

\bibitem[Dayan(1993)]{dayan93successor}
Peter Dayan.
\newblock Improving generalization for temporal difference learning: The
  successor representation.
\newblock \emph{Neural Computation}, 5\penalty0 (4):\penalty0 613--624, 1993.

\bibitem[Englert et~al.(2017)Englert, Vien, and Toussaint]{englert2017invkkt}
Peter Englert, Ngo~Anh Vien, and Marc Toussaint.
\newblock Inverse kkt: Learning cost functions of manipulation tasks from
  demonstrations.
\newblock \emph{The International Journal of Robotics Research}, 36\penalty0
  (13-14):\penalty0 1474--1488, 2017.

\bibitem[Finn et~al.(2016)Finn, Levine, and Abbeel]{DBLP:conf/icml/FinnLA16}
Chelsea Finn, Sergey Levine, and Pieter Abbeel.
\newblock Guided cost learning: Deep inverse optimal control via policy
  optimization.
\newblock In Maria{-}Florina Balcan and Kilian~Q. Weinberger, editors,
  \emph{Proceedings of the 33nd International Conference on Machine Learning,
  {ICML} 2016, New York City, NY, USA, June 19-24, 2016}, volume~48 of
  \emph{{JMLR} Workshop and Conference Proceedings}, pages 49--58. JMLR.org,
  2016.

\bibitem[Frey and Osborne(2017)]{frey2017future}
Carl~Benedikt Frey and Michael~A. Osborne.
\newblock The future of employment: How susceptible are jobs to
  computerisation?
\newblock \emph{Technological Forecasting and Social Change}, 114:\penalty0 254
  -- 280, 2017.
\newblock ISSN 0040-1625.

\bibitem[Fujimoto et~al.(2018)Fujimoto, van Hoof, and
  Meger]{DBLP:conf/icml/FujimotoHM18}
Scott Fujimoto, Herke van Hoof, and David Meger.
\newblock Addressing function approximation error in actor-critic methods.
\newblock In \emph{Proceedings of the 35th International Conference on Machine
  Learning, {ICML} 2018, Stockholmsm{\"{a}}ssan, Stockholm, Sweden, July 10-15,
  2018}, pages 1582--1591, 2018.

\bibitem[Ho and Ermon(2016)]{DBLP:conf/nips/HoE16}
Jonathan Ho and Stefano Ermon.
\newblock Generative adversarial imitation learning.
\newblock In Daniel~D. Lee, Masashi Sugiyama, Ulrike von Luxburg, Isabelle
  Guyon, and Roman Garnett, editors, \emph{Advances in Neural Information
  Processing Systems 29: Annual Conference on Neural Information Processing
  Systems 2016, December 5-10, 2016, Barcelona, Spain}, pages 4565--4573, 2016.

\bibitem[Kingma and Ba(2014)]{DBLP:journals/corr/KingmaB14}
Diederik~P. Kingma and Jimmy Ba.
\newblock Adam: {A} method for stochastic optimization.
\newblock \emph{CoRR}, abs/1412.6980, 2014.

\bibitem[Krajzewicz et~al.(2012)Krajzewicz, Erdmann, Behrisch, and
  Bieker-Walz]{sumo}
Daniel Krajzewicz, Jakob Erdmann, Michael Behrisch, and Laura Bieker-Walz.
\newblock Recent development and applications of sumo - simulation of urban
  mobility.
\newblock \emph{International Journal On Advances in Systems and Measurements},
  3 and 4, 12 2012.

\bibitem[Kulkarni et~al.(2016)Kulkarni, Saeedi, Gautam, and
  Gershman]{kulkarni2016successor}
Tejas~D. Kulkarni, Ardavan Saeedi, Simanta Gautam, and Samuel~J. Gershman.
\newblock Deep successor reinforcement learning.
\newblock \emph{CoRR}, abs/1606.02396, 2016.

\bibitem[Levine et~al.(2011)Levine, Popovic, and Koltun]{NIPS2011_4420}
Sergey Levine, Zoran Popovic, and Vladlen Koltun.
\newblock Nonlinear inverse reinforcement learning with gaussian processes.
\newblock In J.~Shawe-Taylor, R.~S. Zemel, P.~L. Bartlett, F.~Pereira, and
  K.~Q. Weinberger, editors, \emph{Advances in Neural Information Processing
  Systems 24}, pages 19--27. Curran Associates, Inc., 2011.

\bibitem[Nedelkoska and Quintini(2018)]{nedelkoska2018automation}
Ljubica Nedelkoska and Glenda Quintini.
\newblock Automation, skills use and training.
\newblock \penalty0 (202), 2018.

\bibitem[Neu and Szepesv{\'{a}}ri(2007)]{neu07}
Gergely Neu and Csaba Szepesv{\'{a}}ri.
\newblock Apprenticeship learning using inverse reinforcement learning and
  gradient methods.
\newblock In Ronald Parr and Linda~C. van~der Gaag, editors, \emph{{UAI} 2007,
  Proceedings of the Twenty-Third Conference on Uncertainty in Artificial
  Intelligence, Vancouver, BC, Canada, July 19-22, 2007}, pages 295--302.
  {AUAI} Press, 2007.

\bibitem[Ng and Russell(2000)]{DBLP:conf/icml/NgR00}
Andrew~Y. Ng and Stuart~J. Russell.
\newblock Algorithms for inverse reinforcement learning.
\newblock In Pat Langley, editor, \emph{Proceedings of the Seventeenth
  International Conference on Machine Learning {(ICML} 2000), Stanford
  University, Stanford, CA, USA, June 29 - July 2, 2000}, pages 663--670.
  Morgan Kaufmann, 2000.

\bibitem[Ramachandran and Amir(2007)]{ramachandran2007bayesian}
Deepak Ramachandran and Eyal Amir.
\newblock Bayesian inverse reinforcement learning.
\newblock In Manuela~M. Veloso, editor, \emph{{IJCAI} 2007, Proceedings of the
  20th International Joint Conference on Artificial Intelligence, Hyderabad,
  India, January 6-12, 2007}, pages 2586--2591, 2007.

\bibitem[Ratliff et~al.(2006)Ratliff, Bagnell, and
  Zinkevich]{Ratliff:2006:MMP:1143844.1143936}
Nathan~D. Ratliff, J.~Andrew Bagnell, and Martin~A. Zinkevich.
\newblock Maximum margin planning.
\newblock In \emph{Proceedings of the 23rd International Conference on Machine
  Learning}, ICML '06, pages 729--736, New York, NY, USA, 2006. ACM.
\newblock ISBN 1-59593-383-2.

\bibitem[Russell et~al.(2015)Russell, Dewey, and Tegmark]{russell2015research}
Stuart Russell, Daniel Dewey, and Max Tegmark.
\newblock Research priorities for robust and beneficial artificial
  intelligence.
\newblock \emph{Ai Magazine}, 36\penalty0 (4):\penalty0 105--114, 2015.

\bibitem[Tschiatschek et~al.(2019)Tschiatschek, Ghosh, Haug, Devidze, and
  Singla]{DBLP:conf/nips/TschiatschekGHD19}
Sebastian Tschiatschek, Ahana Ghosh, Luis Haug, Rati Devidze, and Adish Singla.
\newblock Learner-aware teaching: Inverse reinforcement learning with
  preferences and constraints.
\newblock In Hanna~M. Wallach, Hugo Larochelle, Alina Beygelzimer, Florence
  d'Alch{\'{e}}{-}Buc, Emily~B. Fox, and Roman Garnett, editors, \emph{Advances
  in Neural Information Processing Systems 32: Annual Conference on Neural
  Information Processing Systems 2019, NeurIPS 2019, 8-14 December 2019,
  Vancouver, BC, Canada}, pages 4147--4157, 2019.

\bibitem[van Hasselt et~al.(2015)van Hasselt, Guez, and
  Silver]{DBLP:journals/corr/HasseltGS15}
Hado van Hasselt, Arthur Guez, and David Silver.
\newblock Deep reinforcement learning with double q-learning.
\newblock \emph{CoRR}, abs/1509.06461, 2015.

\bibitem[Watkins and Dayan(1992)]{watkins1992q}
Christopher~JCH Watkins and Peter Dayan.
\newblock Q-learning.
\newblock \emph{Machine learning}, 8\penalty0 (3-4):\penalty0 279--292, 1992.

\bibitem[Wulfmeier et~al.(2015)Wulfmeier, Ondruska, and
  Posner]{DBLP:journals/corr/WulfmeierOP15}
Markus Wulfmeier, Peter Ondruska, and Ingmar Posner.
\newblock Maximum entropy deep inverse reinforcement learning.
\newblock \emph{CoRR}, abs/1507.04888, 2015.

\bibitem[Ziebart et~al.(2008)Ziebart, Maas, Bagnell, and
  Dey]{Ziebart2008MaximumEI}
Brian~D. Ziebart, Andrew~L. Maas, J.~Andrew Bagnell, and Anind~K. Dey.
\newblock Maximum entropy inverse reinforcement learning.
\newblock In \emph{AAAI}, 2008.

\end{thebibliography}

\appendix
\section*{Appendix}

\section{Proof of Theorem 1}

\setcounter{theorem}{0}
\begin{theorem}
There always exists a solution for the linear system provided by $\mathcal{X}_\mathcal{A}(s)$ and $\mathcal{Y}_\mathcal{A}(s)$.
\end{theorem}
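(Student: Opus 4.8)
The plan is to analyze the structure of the coefficient matrix $\mathcal{X}_\mathcal{A}(s)$ and reduce the existence question to a single consistency condition on the target vector $\mathcal{Y}_\mathcal{A}(s)$. First I would observe that $\mathcal{X}_\mathcal{A}(s)$ can be written compactly as $\frac{n}{n-1}I - \frac{1}{n-1}J$, where $I$ is the $n\times n$ identity and $J$ is the all-ones matrix: the diagonal entry becomes $\frac{n}{n-1}-\frac{1}{n-1}=1$ and each off-diagonal entry becomes $-\frac{1}{n-1}$, as required. Since $J$ acts as multiplication by $n$ on the all-ones vector $\mathbf{1}$ and annihilates its orthogonal complement, $\mathcal{X}_\mathcal{A}(s)$ has eigenvalue $0$ on $\mathbf{1}$ (because $\frac{n}{n-1}-\frac{1}{n-1}n=0$) and eigenvalue $\frac{n}{n-1}$ on $\{\mathbf{1}\}^\perp$. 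Hence $\mathcal{X}_\mathcal{A}(s)$ is symmetric, singular, of rank $n-1$, with kernel spanned by $\mathbf{1}$.

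Because $\mathcal{X}_\mathcal{A}(s)$ is symmetric, its range equals the orthogonal complement of its kernel, namely $\{\mathbf{1}\}^\perp$. Therefore the system $\mathcal{X}_\mathcal{A}(s)\mathcal{R}_\mathcal{A}(s)=\mathcal{Y}_\mathcal{A}(s)$ is solvable if and only if $\mathbf{1}^\top\mathcal{Y}_\mathcal{A}(s)=0$, i.e.\ the entries of the target vector sum to zero. When this holds, the solution set is a one-dimensional affine subspace, which is consistent with (and in fact reflects) the usual reward ambiguity in IRL; the theorem claims only existence, not uniqueness, precisely because $\mathcal{X}_\mathcal{A}(s)$ has a nontrivial kernel.

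The crux is then to verify that $\sum_i \mathcal{Y}_i = 0$. I would substitute $\mathcal{Y}_i = \eta_s^{a_i} - \frac{1}{n-1}\sum_{b\in\mathcal{A}_{\overline{a_i}}}\eta_s^{b}$ and exchange the order of summation. The key step is a double-counting argument: in the double sum $\sum_i\sum_{b\neq a_i}\eta_s^{b}$, each fixed term $\eta_s^{b}$ is counted once for every index $i$ with $a_i\neq b$, i.e.\ exactly $n-1$ times, so the double sum equals $(n-1)\sum_b \eta_s^{b}$. The prefactor $\frac{1}{n-1}$ then cancels, yielding $\sum_i\mathcal{Y}_i = \sum_i \eta_s^{a_i} - \sum_b \eta_s^{b} = 0$. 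This establishes the consistency condition and hence the existence of a solution.

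I expect the main obstacle to be presentational rather than deep: the singularity of $\mathcal{X}_\mathcal{A}(s)$ might at first glance suggest that no solution exists, so the argument must make clear that existence is rescued by the structural identity $\sum_i\mathcal{Y}_i=0$, which places $\mathcal{Y}_\mathcal{A}(s)$ exactly in the range of the matrix. Care is needed only in tracking the combinatorics of the double sum so that each $\eta_s^{b}$ is counted the correct $n-1$ times.
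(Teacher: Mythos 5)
Your proof is correct and follows essentially the same route as the paper's: both establish that $\mathcal{X}_\mathcal{A}(s)$ has rank $n-1$ and that the entries of $\mathcal{Y}_\mathcal{A}(s)$ sum to zero, so that appending the target vector does not increase the rank and a solution exists. You simply make explicit what the paper only asserts — identifying the kernel as $\operatorname{span}\{\mathbf{1}\}$ via the decomposition $\frac{n}{n-1}I-\frac{1}{n-1}J$ and verifying $\mathbf{1}^\top\mathcal{Y}_\mathcal{A}(s)=0$ by the double-counting argument — where the paper invokes row reduction and the Rouché--Capelli theorem.
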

\begin{proof}
By finding the row echelon form, it can be shown that the rank of the coefficient matrix $\mathcal{X}_\mathcal{A}$ is $n-1$. In order to proof that there is at least one solution for the system of linear equations, it has to be shown that the rank of the augmented matrix remains the same. The rank of a matrix cannot decrease by adding a column. Hence, it can only increase. It can easily be shown, that a linear combination of all entries of the augmentation vector $\mathcal{Y}_\mathcal{A}$ adds to 0, thus the rank does not increase. Following the Rouché–Capelli theorem \cite{capelli}, there always exists at least one solution, which can then be found with a solver such as least-squares.
\end{proof}

\section{Constrained Inverse Q-learning}
The pseudocode of the tabular and deep variants of Constrained Inverse Q-learning can be found in \Cref{alg:invtabmodfcq} and \Cref{alg:invdeepmodfcq}.

\begin{algorithm}
  initialize $r$, $Q$ and $Q^\text{Sh}$\\
  initialize state-action visitation counter $\rho$\\
  initialize $Q^\mathcal{C}$
  \For{$episode=1..E$} {
    get initial state $s_1$\\
    \For{$t=1..T$}{
        observe action $a_t$ and next state $s_{t+1}$, increment counter $\rho(s_t,a_t) = \rho(s_t,a_t) + 1$\\
        get log-probabilities $\tilde{\pi}^\mathcal{E}(a|s_t)$ for state $s_t$ and all $a\in\mathcal{A}$ from $\rho$\\
        update $Q^\text{Sh}$ by $Q^\text{Sh}(s_t, a_t)\leftarrow (1-\alpha_{\text{Sh}})Q^\text{Sh}(s_t, a_t) + \alpha_{\text{Sh}}\left(\gamma\max_a Q(s_{t+1}, a)\right)$\\
        calculate for all actions $a\in\mathcal{A}$: $\tilde{\eta}_{s_t}^a=\log(\tilde{\pi}^\mathcal{E}(a|s_t))-Q^\text{Sh}(s_t, a)$\\
        update $r$ by $r(s_t, a_t)  \leftarrow  (1 - \alpha_r) r(s_t,a_t)+ \alpha_r ( \eta_{s_t}^{a_t} + \frac{1}{n-1} \sum_{b\in\mathcal{A}_{\overline{a_t}}}  r(s_t, b) - \eta_{s_t}^b)$\\
        update $Q$ by $Q(s_t, a_t)\leftarrow (1-\alpha_Q)Q(s_t, a_t) + \alpha_Q(r(s_t,a_t) + \gamma\max_a Q(s_{t+1}, a))$\\
        update $Q^\mathcal{C}$ by $Q^\mathcal{C}(s_t, a_t)\leftarrow (1-\alpha_\mathcal{C})Q^\mathcal{C}(s_t, a_t) + \alpha_\mathcal{C}(r(s_t,a_t) + \gamma\max_{a\in\mathcal{S}(s_{t+1})} Q^\mathcal{C}(s_{t+1}, a))$\\
    }
  }
\caption{Tabular Model-free Constrained Inverse Q-learning}
\label{alg:invtabmodfcq}
\end{algorithm}

\begin{algorithm}
  \SetKwInput{KwInput}{input} 
  \KwInput{replay buffer $\mathcal{D}$}
  initialize networks $r(\cdot, \cdot|\theta^r)$, $Q(\cdot, \cdot|\theta^Q)$ and $Q^\text{Sh}(\cdot, \cdot|\theta^{\text{Sh}})$ and classifier $\rho(\cdot, \cdot|\theta^\rho)$\\
  initialize target networks $r'(\cdot, \cdot|\theta^{r\prime})$, $Q'(\cdot, \cdot|\theta^{Q\prime})$ and $Q^{\text{Sh}\prime}(\cdot, \cdot|\theta^{{\text{Sh}\prime}})$\\
  initialize $Q^\mathcal{C}(\cdot, \cdot|\theta^\mathcal{C})$ and $Q^{\mathcal{C}\prime}(\cdot, \cdot|\theta^{\mathcal{C}\prime})$\\

  \For{$iteration=1..I$} {
    sample minibatch $\mathcal{B}=(s_i, a_i, s_{i+1})_{1\leq i \leq m}$ from $\mathcal{D}$\\
    minimize MSE between predictions of $Q^\text{Sh}$ and $y^\text{Sh}_i=\gamma\max_a Q'(s_{i+1}, a|\theta^{Q\prime})$\\
    minimize $\mathcal{CE}$ between predictions of $\rho$ and actions $a_i$\\
    get log-probabilities $\tilde{\pi}^\mathcal{E}(a|s_i)$ for state $s_i$ and all $a\in\mathcal{A}$ from $\rho(s_i, a|\theta^\rho)$\\
    calculate for all actions $a\in\mathcal{A}$: $\tilde{\eta}_{s_i}^a=\log(\tilde{\pi}^\mathcal{E}(a|s_i))-Q^{\text{Sh}\prime}(s_i, a|\theta^{{\text{Sh}\prime}})$\\
    minimize MSE between predictions of $r$ and $y^r_i=\tilde{\eta}_{s_i}^{a_i}+\frac{1}{n-1}\sum_{b\in\mathcal{A}_{\overline{a_i}}}r'(s_i, b|\theta^{r\prime}) - \tilde{\eta}^b_{s_i}$\\
    minimize MSE between predictions of $Q$ and $y^Q_i=r'(s_i, a_i|\theta^{r\prime})+\gamma\max_a Q'(s_{i+1}, a|\theta^{Q\prime})$\\
    minimize MSE between $Q^\mathcal{C}$ and $y^\mathcal{C}_i=r'(s_i, a_i|\theta^{r\prime})+\gamma\max_{a\in\mathcal{S}(s_{i+1})} Q^{\mathcal{C}\prime}(s_{i+1}, a|\theta^{\mathcal{C}\prime})$\\
    update target networks $r'$, $Q'$, $Q^{\text{Sh}\prime}$ and $Q^{\mathcal{C}\prime}$\\
  }
\caption{Fixed Batch Deep Constrained Inverse Q-learning}
\label{alg:invdeepmodfcq}
\end{algorithm}

\section{Objectworld: Architectures and Hyperparameter Optimization}
We optimized the learning rates of the different approaches using grid search over five training runs for random Objectworld environments. For the MaxEnt IRL approaches, we tested learning rates in the interval $(0.1, 0.001)$. We achieved best results with a learning rate of $0.01$. For Deep MaxEnt, we used a 3-layer neural network with 3 hidden dimensions for the approximation of the reward and a learning rate of $0.001$. The learning rates of the table-updates for the reward function, Q-function and the Shifted Q-function for IQL were optimized in a range of $(0.1, 10^{-4})$. The best performing learning rates were $10^{-3}$ for all networks for IQL and $10^{-4}$ for DIQL. An extensive evaluation of the influence of the learning rates on the convergence behavior for Online IQL is shown in \Cref{fig:online_iql}. For DIQL, the parameters were optimized in the range of $(0.01, 10^{-5})$ with the optimizer Adam \cite{DBLP:journals/corr/KingmaB14} and used Rectified Linear Units activations. As final learning rates, we chose $10^{-4}$ for all networks. Target networks are updated with a step-size of $\tau=10^{-4}$.

\begin{figure}[h!]
    \centering
     \includegraphics[width=0.8\textwidth]{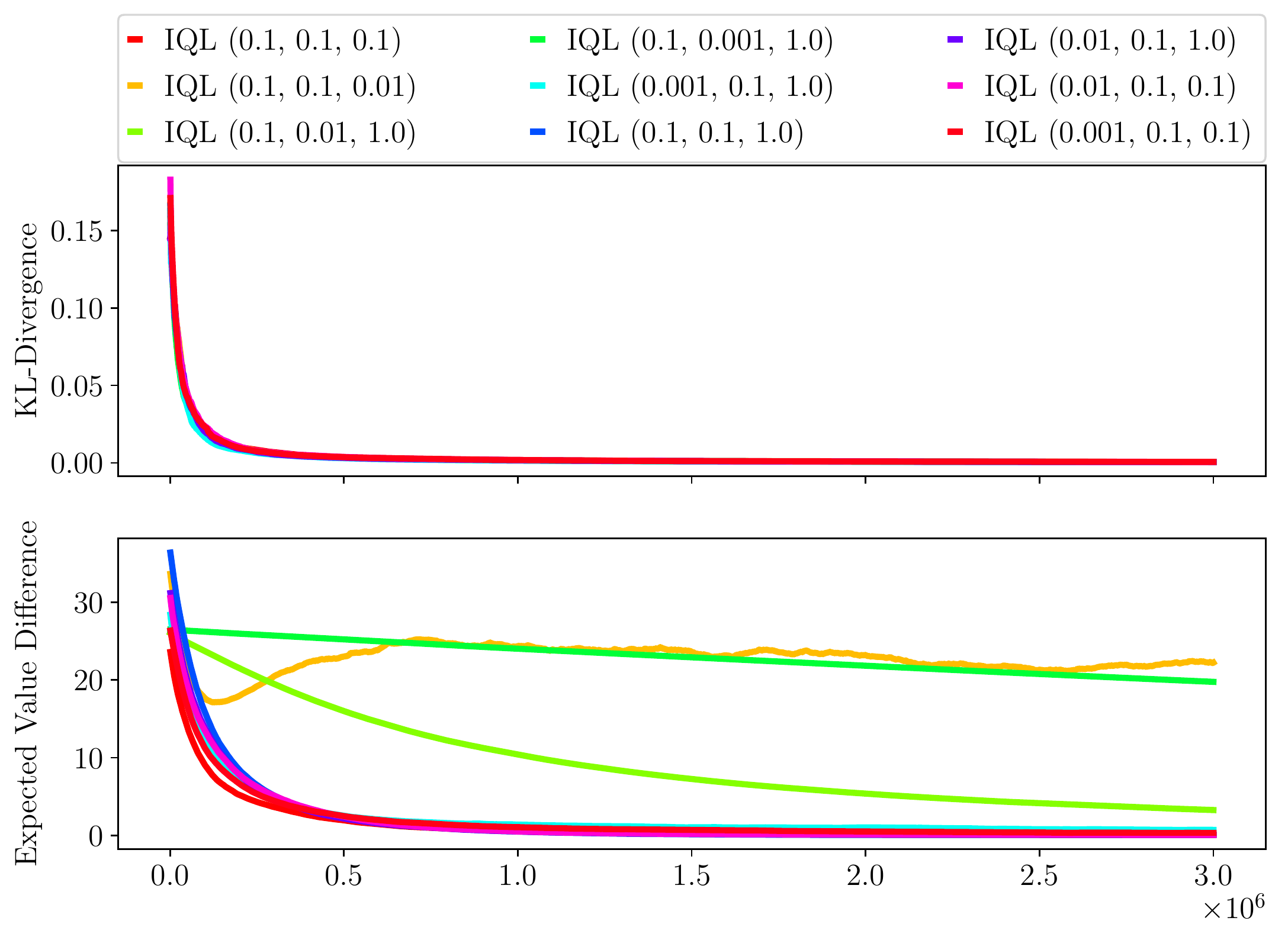}
    \caption{Online IQL with different learning rates for an Objectworld environment. In parentheses are the learning rates for $r$, $Q$ and $Q^\text{Sh}$, respectively. The upper row shows the KL-divergence between the true and the sample action distribution. The lower row shows the EVD.}
    \label{fig:online_iql}
\end{figure}

\section{Objectworld: Additional Results}
Visualizations of the real and learned state-value functions of IAVI, IQL and DIQL can be found in \Cref{fig:objectworld_overview}. The approaches mach the action distribution of the observed trajectories  (denoted as ground truth) without any visual difference. With increasing number of trajectories, the distributions are converging to the distribution of the optimal policy of the expert.

\begin{figure}[h!]
    \centering
     \includegraphics[width=0.74\textwidth]{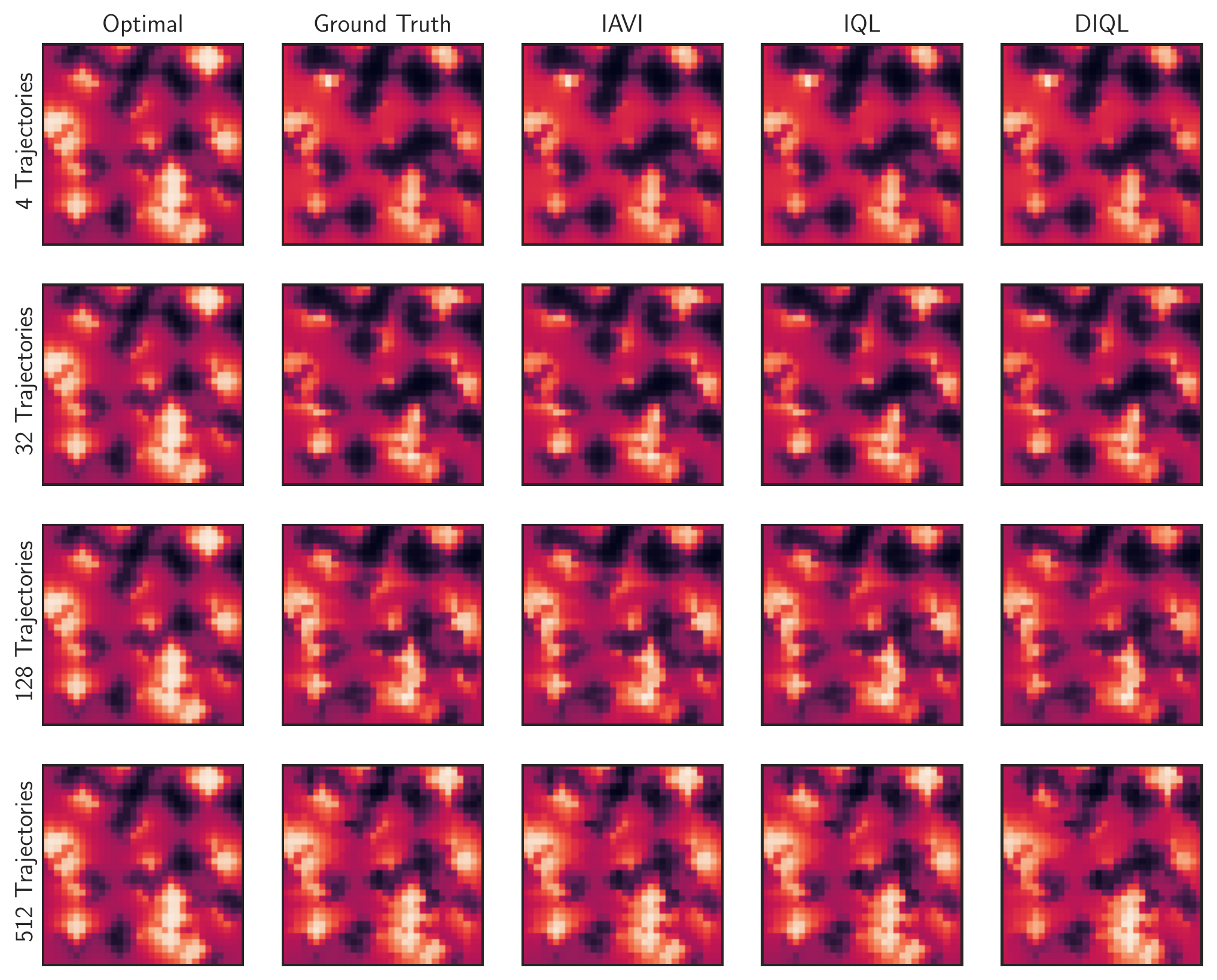}
    \caption{Visualization of the real and learned state-value functions for different numbers of trajectories in Objectworld.}
    \label{fig:objectworld_overview}
\end{figure}

\section{SUMO: Architectures and Hyperparameter Optimization}
For the function approximators of the reward function, Q-function and Shifted-Q function, we use the hyperparameter-optimized architecture proposed in \cite{DeepSetQ} with two fully-connected layers with $[20, 80]$ hidden dimensions to compute latent-representations of the vehicles with the encoder module $\phi$ and two fully-connected layers for the module $\rho$ with 80 and 20 hidden neurons and two layers with 100 neurons in the Q-network module. We train networks with learning rates of $10^{-4}$ . Target networks are updated with a step-size of $\tau=10^{-4}$. Additionally, we use a variant of Double-Q-learning proposed in \cite{DBLP:journals/corr/HasseltGS15}, which is based on two Q-networks and uses the minimum of the predictions for the target calculation, similar as in \cite{DBLP:conf/icml/FujimotoHM18}.

\end{document}